\relax
\documentclass[letterpaper]{article} 
\usepackage{aaai22}  
\usepackage{times}  
\usepackage{helvet}  
\usepackage{courier}  
\usepackage[hyphens]{url}  
\usepackage{graphicx} 
\urlstyle{rm} 
\usepackage{natbib}  
\usepackage{caption} 
\DeclareCaptionStyle{ruled}{labelfont=normalfont,labelsep=colon,strut=off} 
\frenchspacing  
\setlength{\pdfpagewidth}{8.5in}  
\setlength{\pdfpageheight}{11in}  
%
\usepackage{algorithm}
\usepackage{algorithmic}

%
\usepackage{microtype}
\usepackage{bm}             
\usepackage{newfloat}
\usepackage{listings}
\usepackage{svg}
\lstset{%
	basicstyle={\footnotesize\ttfamily},
	numbers=left,numberstyle=\footnotesize,xleftmargin=2em,
	aboveskip=0pt,belowskip=0pt,%
	showstringspaces=false,tabsize=2,breaklines=true}
\floatstyle{ruled}
\newfloat{listing}{tb}{lst}{}
\floatname{listing}{Listing}
%
%
\pdfinfo{
/Title (Path-specific Objectives for Safer Agent Incentives)
/Author (Anonymous)
/TemplateVersion (2022.1)
}

\usepackage{subcaption}     
\usepackage[decisionutilitycolor]{cid}            
\usepackage{seb}
\usepackage[capitalize]{cleveref}
\newtheorem{theorem}{Proposition}

\theoremstyle{remark}

\theoremstyle{definition}
\newtheorem{definition}{Definition}[section]

\setcounter{secnumdepth}{1} 

%


\title{Path-Specific Objectives for Safer Agent Incentives}
\author{
   Sebastian Farquhar$^{1,2}$, Ryan Carey$^1$, Tom Everitt$^2$
}
\affiliations{
$^1$University of Oxford, $^2$DeepMind
}

\begin{document}

\maketitle

\begin{abstract}
    We present a general framework for training safe agents whose naive incentives are unsafe.
    As an example, manipulative or deceptive behaviour can improve rewards but should be avoided.
    Most approaches fail here: agents maximize expected return by any means necessary.
    We formally describe settings with `delicate' parts of the state which should not be used as a means to an end.
    We then train agents to maximize the causal effect of actions on the expected return which is \textit{not} mediated by the delicate parts of state, using Causal Influence Diagram analysis.
    The resulting agents have no incentive to control the delicate state.
    We further show how our framework unifies and generalizes existing proposals.
\end{abstract}
\newtheorem{manualtheoreminner}{Theorem}
\newenvironment{manualtheorem}[1]{%
  \renewcommand\themanualtheoreminner{#1}%
  \manualtheoreminner
}{\endmanualtheoreminner}
\newtheorem{manualdefinner}{Definition}
\newenvironment{manualdef}[1]{%
  \renewcommand\themanualdefinner{#1}%
  \manualdefinner
}{\endmanualdefinner}

\section{Introduction}
Artificial agents can have unsafe incentives to influence parts of their environments in unintended ways.
For example, content recommendation systems can achieve good performance by manipulating their users to develop more predictable preferences instead of catering to their tastes directly \citep{russell_human_2019}.
These incentives can be instrumental: indirectly achieving what the system's designers asked for but did not intend \citep{everitt_agent_2021}.

In these cases, it is hard to just pick a better reward function.
Imagine the unenviable task of writing down which user-preferences are desirable!
We would rather ensure that the agent has no systematic incentive to manipulate people's preferences at all---as opposed to an agent with an incentive to encourage its users to have the `right' kind of preference.
In this setting, the users' preferences are what we call \textit{delicate} state: a part of the environment which is hard to define a reward for and vulnerable to deliberate manipulation.

Even when part of the state-space is delicate, other parts might be tractable.
For example, we might know exactly how we want to price media-bandwidth consumption driven by our recommendations.
This paper provides a framework for designing agents to act safely in environments where parts of the state-space are delicate and other parts are not, under assumptions which permit causal effects estimation.

We show how to train agents in a way that removes incentives to control delicate state.
This can be interpreted as creating an agent which does not intend to affect the delicate state \citep{halpernFormal2018}, which is distinct from both trying not to influence that state or to keep it constant \citep{turner_avoiding_2020, krakovna_avoiding_2020}.

We use causal influence diagrams (CIDs) which can formally express instrumental control incentives \citep{everitt_agent_2021}.
We show that one can remove the instrumental control incentive over delicate state by training agents to maximize the path-specific causal effect \citep{pearl_direct_2001} of their actions on the reward following paths which are not mediated by the delicate state.
Moreover, we show how a diverse set of previous proposals for safe agent design can be motivated by these principles.
In this way, we unify and generalize approaches from topics such as reward tampering \citep{uesato_avoiding_2020, everitt_reward_2021}, online reward learning \citep{armstrong_2020_pitfalls}, and auto-induced distributional shift \citep{krueger_hidden_2020}.
At the same time, we show how these methods depend on assumptions about the state-space which have not previously been acknowledged.
We highlight the opportunities and dangers of these approaches empirically in a content recommendation environment from \citet{krueger_hidden_2020}.
Our main contributions are:
\begin{itemize}
    \item We formalize the problem of delicate state as a complement to reward specification (\S\ref{s:problem});
    \item We propose path-specific objectives (\S\ref{s:our_approach});
    \item We show this generalizes and unifies prior work (\S\ref{s:comparisons}).
\end{itemize}

\section{The Problem of Delicate State}\label{s:problem}
\textit{Delicate state} is a tool for framing safe agent design.
When a state is \textit{subtle} and \textit{manipulable} we call it \textit{delicate}:
\begin{description}[leftmargin=!, labelwidth=\widthof{\bfseries Manipulable }]
    \item[Subtle] Hard to specify a reward for.
    \item[Manipulable] Vulnerable to motivated action---intentional actions can have bad outcomes.
\end{description}
Jointly, these are dangerous: it is hard to say what we want for the state and it is easy for influence on the state to have bad consequences.
A person's political beliefs might be an example of such a state.
The current toolbox for safe agent design mostly tries to attack subtlety directly by finding a better way to specify the reward.
This is not our approach---instead we aim to remove any incentive for the agent to control the delicate part of state-space.

If a part of state-space is delicate then having a control incentive over it is dangerous.
But in order for removing it to lead to safe outcomes a third condition is needed:
\begin{description}[leftmargin=!, labelwidth=\widthof{\bfseries Manipulable}]
    \item[Stable] Robust against unmotivated action---side-effects are unlikely to be bad.
\end{description}
Stability entails that an agent with no systematic incentive to influence the state---but which still influences the state and may produce side-effects over it---is safe.
As a metaphor for a system which is both manipulable and stable, consider a puzzle box: apply the right pressure to the right spots and it comes apart easily, but you can fumble randomly or even use it as a mallet and it will not open.
We might hypothesise that a person's political beliefs are relatively stable---after all, most people are able to think critically and independently in the presence of influences in many directions.

We define delicate state within the context of a factored Markov decision process (MDP) characterized by transition function, reward function, action-space, and, unlike standard MDPs, a state-space factored into a robust state $s \in \mathcal{S}$ and a delicate state $z \in \mathcal{Z}$ such that the overall state is $\{s,z\} \in \mathcal{S} \cup \mathcal{Z}$.
The transition function therefore maps $s, z$ and action ($a \in \mathcal{A}$) onto the succeeding state $s', z'$ and the reward function maps $s, s', z, z', a$ onto a reward $r \in \mathcal{R}$.

\subsection{Subtlety}
We consider five cases that make a state subtle, and thereby potentially delicate.
In each, it is not enough to simply pick a reward function that does not explicitly depend on $z$ because instrumental incentives emerge when $Z$ and $S$ interact.

\paragraph{Not Ordered}{There might not be a well-defined ethical ranking of different values of $Z$ or it might be unethical to codify a ranking.
For example, it may be unethical for a system to systematically influence user's beliefs, preferences, and political views \citep{burr_analysis_2018}.
Content recommender systems often interact with subtle human states \citep{kramer_experimental_2014}.
}
\paragraph{Vague}{
Even if an ordering is possible, we may not trust our agent-designers to describe it.
Reward modelling \citep{leike_scalable_2018} or alternative work on reward specification \citep{christiano_deep_2017} seeks to attack this source of subtlety directly, while our approach tries to side-step it.
}
\paragraph{Unenforceable}{
Even with a well-specified reward, we may be unable to enforce it, for example, if $Z$ is the physical implementation of the reward function then a modified $Z$ might no longer punish the agent for having changed it \citep{concrete_amodei_2016, everitt_reward_2021}.
}
\paragraph{Illegal}{
The law might ban a well-specified and enforceable reward.
For example, if $Z$ is the market-price of an asset, deliberately influencing it may be market manipulation.
}
\paragraph{Structural}{
We might \textit{choose} not to reward based on $Z$ in order to construct an ecosystem of agents.
For example, $Z$ might be a performance measure of our agent which is used by another agent.
Alternatively, the system might have deliberately demarcated roles, much as judges may be asked to apply the law as it stands, ignoring political consequences.
}

\subsection{Manipulability and Stability}\label{s:frangible}
A manipulable state is one where deliberate or intentional actions can easily bring about harm.
We adopt a notion of `intentionality' built on incentives---assuming that an agent which has an incentive over $Z$ and influences $Z$ does so `intentionally', following \citet{halpernFormal2018}.
This approach, described more formally below, has the advantage of being agnostic to the specific implementation of the agent (models, algorithms, etc.).
We contrast this with instability---where \textit{non-deliberate} actions can easily bring about harm.

We can draw parallels to `safety' and `security' in cyber-security.
A secure system is one that is robust to malicious actors (not manipulable), while a safe system is robust to natural behaviour (stable).
For example, making a user manually type \texttt{Delete my\_repo} improves safety---it is unlikely to happen unintentionally---while doing nothing to improve security.
Requiring a secret password instead would improve both safety and security.
Our approach is most applicable in settings that are safe (in this sense) but not secure---of which user-preference manipulation or reward tampering are archetypal.

One can show when a system is \textit{not} stable by demonstrating a natural behaviour that produces bad outcomes.
Proving that a system is stable is an open challenge.
The problem of stability is related to the problem identified by \citet{armstrongCounterfactual2021}: delicate states whose random variables have mutual information with the utility might be systematically influenced even without an incentive present.
We consider stability further in \S\ref{s:discussion} alongside other limitations of our method, and highlighting that these are previously unacknowledged limitations of a number of related methods.

\section{Background on Causal Influence Diagrams}\label{s:cid_background}
Causal Influence Diagrams (CIDs) combine ideas from influence diagrams \citep{howard_influence_2005, lauritzen_representing_2001} and causality \citep{pearl_causality_2009} and can be used to identify incentives \citep{everitt_agent_2021}.
They are particularly well-suited to the analysis of delicate state because they explicitly represent the causal interactions of agents, rewards, and different parts of state while also formalizing graphical criteria for the presence of incentives to control certain parts of state.
In this section we provide a background on causal models and CIDs which is needed to formally develop the delicate state setting.
We return to a broader review of prior work in \S\ref{s:related_work}.
Throughout this paper we adopt the convention that upper case denotes random variables and lower case their realizations.
We also elide whether random variables are singletons or sets, noting that a set of random variables is identical to a set-valued random variable.

Restating (using original numbering) the definition of the CID itself:

\begin{figure}
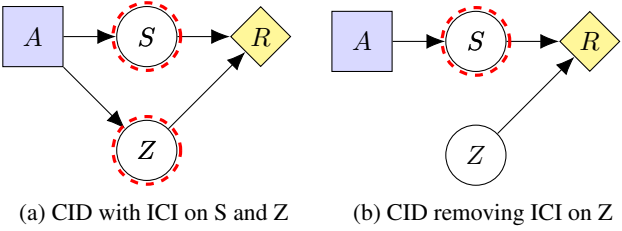

    \centering
    \begin{subfigure}[b]{0.48\columnwidth}
        \input{figures/cid_1step_with_ici}
        \caption{CID with ICI on S and Z}
        \label{fig:unmodified_one_step_cid}
    \end{subfigure}
    \hfill
    \begin{subfigure}[b]{0.48\columnwidth}
        \input{figures/cid_1step_no_ici}
        \caption{CID removing ICI on Z}
        \label{fig:modified_one_step_cid}
    \end{subfigure}
    \caption{
    Causal Influence Diagrams in a setting with delicate state.
    Blue square is decision, yellow diamond is utility.
    Black arrows show causal influence.
    Dashed red circles show instrumental control incentive (ICI).
    (a) the agent has ICI over $\{S, Z\}$.
    (b) $Z$ does not influence $R$ so no ICI on $Z$.}
    \label{fig:one_step_cid}
\end{figure}

\begin{manualdef}{E3}[\citealp{everitt_agent_2021}]
A \textit{causal influence diagram} (CID) is a directed acyclic graph $\G$ whose vertex set $\mathbf{V}$ is partitioned into structure nodes, $\mathbf{X}$, action nodes $\mathbf{A}$, and utility nodes, $\mathbf{U}$.
\end{manualdef}

Intuitively, this is a graph where some nodes represent the agent's decision and others its goals.
The rest are structure nodes.
Note that what we call `action' nodes are sometimes called `decision' nodes.
Arrows into action nodes are called `information links'.
The relationship between the nodes are defined by structural functions in a SCIM:

\begin{manualdef}{E4}[\citealp{everitt_agent_2021}]
A \textit{structural causal influence model} (SCIM) is a tuple $\M = \langle \G, \mathbf{\mathcal{E}}, \mathbf{F}, P \rangle$ where:
\begin{itemize}
    \item $\G$ is a CID with finite-domain $\mathbf{V}$ and $\mathbf{U} \in \mathbb{R}^n$.
    We say that $\M$ is \textit{compatible with} $\G$.
    \item $\mathbf{\mathcal{E}} = \{\mathcal{E}^V\}_{V \in \mathbf{V}}$ is a set of finite-domain \textit{exogenous variables}, one for each element of $\mathbf{V}$.
    \item $\mathbf{F} = \{f^V\}_{V \in \mathbf{V} \setminus \mathbf{A}}$ is a set of \textit{structural functions} $f^{V}: \text{dom}(\parents{V}\cup \{\mathcal{E}^V\}) \rightarrow \text{dom}(V)$ that specify how each non-decision variable depends on its parents in $\G$ and exogenous variable.
    \item $P$ is a Markovian probability distribution over $\mathbf{\mathcal{E}}$ (i.e., all elements are mutually independent).
\end{itemize}
\end{manualdef}
Intuitively, this describes how variables at the nodes change with each other and incorporates chance.
The notation $\Parents{X}$ describes the parents of $X$ in $\G$.
The goal of the agent is to select a policy $\pi:\dom(\Parents{A})\to\dom(A)$ for each action node, $A$, so that the expected sum of the utility nodes is maximized.

We also use structural causal models (SCMs) for some of our analysis.
While SCMs are logically more fundamental than SCIMs, for the purpose of this paper we can think of them as SCIMs without action nodes.
That is, an SCM is a SCIM where all nodes have been assigned structural functions.
In particular, imputing a policy, $\pi$, to a SCIM, $\M$, turns the SCIM into the SCM $\M_\pi$.

SCMs are fully developed by \citet{pearl_causality_2009}, who also formalize the intervention notation $\dop{X = x}$ to mean intervening to set the random variable $X$ to $x$.
Formally, $\dop{X=x}$ replaces the structural function $f^X$ with a constant function $X=x$.
The \emph{potential response} $Y_x$ is used to denote $Y$ under the intervention $\dop{X=x}$.
Somewhat abusing notation, we will write $Y_\pi$ for the variable $Y$ in $\M_\pi$, and $Y_{\pi,x}$ for this variable under the intervention $\dop{X=x}$. 
Potential responses can be nested, allowing expressions such as $Z_{Y_x}$, which should be interpreted as $Z_y$ where $y=Y_x$.

CIDs have been used to define instrumental control incentives, which formalises the intuitive notion of which variables that agent `wants' to influence:
\begin{manualdef}{E17}[\citealp{everitt_agent_2021}]
There is an \textit{instrumental control incentive} (ICI) in a SCIM with a single-decision CID $\M$ on a variable $X$ in with $\parents{A}$ with total return $\U = \sum \mathbf{U}$ if, for all optimal policies $\pi^*$,
\begin{equation}
    \E{\pi^*}{\mathcal{U}_{X_a}\mid \parents{A}} \neq \E{\pi^*}{\mathcal{U}\mid \parents{A}}
\end{equation}
\end{manualdef}
$\mathcal{U}_{X_a}$ is the utility in the nested potential response where $X$ is as if $A$ had been $a$.
Intuitively, this says that the agent has an ICI over $X$ if it could achieve utility different than that of the optimal policy, were it also able to independently set $X$.
Finally, to diagnose the presence or lack of ICIs over the delicate state:
\begin{manualtheorem}{E18}[\citealp{everitt_agent_2021}]\label{thm:ici}
A single-decision CID $\mathcal{G}$ admits an ICI over $X \in \mathbf{V}$ iff $\mathcal{G}$ has a directed path from $A$ to $\mathbf{U}$ via $X$: i.e. a directed path $A \rightarrow X \rightarrow \mathbf{U}$.
\end{manualtheorem}
To review these concepts, examine Fig.\ \ref{fig:one_step_cid} where we contrast a CID which admits an ICI over $Z$ (Fig.\ \ref{fig:unmodified_one_step_cid}) with a CID which is not (Fig.\ \ref{fig:modified_one_step_cid}).

\section{General Delicate MDP CID}\label{s:general}
Applying these tools to \S\ref{s:problem}, we construct a general CID for a factored MDP with delicate and robust state.
\begin{definition}
A \emph{delicate $T$-step MDP} is a factored MDP \citep{boutilier_stochastic_2000} where the state is factored into delicate state, $Z_t$, and robust state, $S_t$, at each timestep.
\end{definition}
 
We can describe a delicate MDP with a CID containing random variables $Z_t$, $S_t$, $A_t$, and $R_t$ for $0\leq t\leq T$.
Here, $A_t$ are action nodes; the $R_t$'s are utility nodes (discounting can be introduced by scaling these); all other nodes are chance nodes.
Variables depend only on the most recent timestep.
The decision node $A_t$ can observe $Z_t$, $S_t$, and $R_t$.
The resulting CID is shown in Fig.\ \ref{fig:cid_3_step_with_ici}.
Special cases of this graph can remove influence arrows.
For example, work on reward tampering often assumes that the reward function specification (here modelled as $Z_t$) cannot directly influence the rest of the state (here modelled as $S_{t+1}$) \citep{everitt_reward_2021}.

\section{Path-specific Objectives}\label{s:our_approach}\label{s:pso}
We show how to train an agent in a way that removes instrumental control incentives (ICIs) over the delicate state even though the environment actually has unsafe incentives.
This can be interpreted as creating an agent that does not intend to use the delicate state \citep{halpernFormal2018}.

\begin{figure}
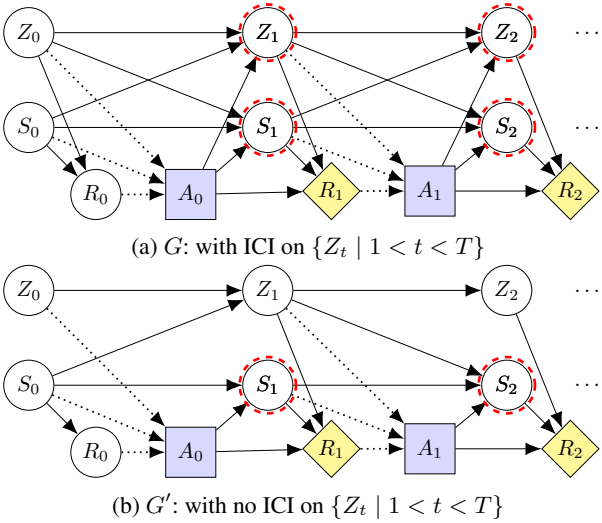

    \centering
    \begin{subfigure}[b]{\columnwidth}
        \resizebox{\linewidth}{!}{
        \input{figures/cid_3step_with_ici}}
        \caption{$\G$: with ICI on $\{Z_t \mid 1<t<T\}$}
    \label{fig:cid_3_step_with_ici}
    \end{subfigure}
    \hfill
    \begin{subfigure}[b]{\columnwidth}
        \resizebox{\linewidth}{!}{
        \input{figures/cid_3step_without_ici}}
        \caption{$\G'$: with no ICI on $\{Z_t \mid 1<t<T\}$}
        \label{fig:cid_3_step_without_ici}
    \end{subfigure}
    \caption{(a) A general delicate MDP CID.
    (b) Removing paths from $A_0$ to $Z_t$ removes ICI on $\{Z_t\}_{1 < t < T}$.
    (Dashes are information links \citep{howard1984principles}.)
    }
    \label{fig:cid_3_step}
\end{figure}

To understand the causal effect that a variable $X$ has on a variable $Y$ along an edge-subgraph $G'$ of an SCM $M$, \citet[Definition 8]{pearl_direct_2001} defines path-specific causal effects.
Informally, the path-specific effect along $G'$ compares the outcome of $Y$ under a default outcome $\bar x$ for $X$ with the value that $Y$ takes under a different outcome $x$, when the effect of the new value is propagated only along $G'$.
Formally, restating their definition with our notation:
\begin{manualdef}{P8}[\citet{pearl_direct_2001}]
\label{def:pse}
Let $G$ be the causal graph associated with causal model $M$, and let $G'$ be an edge-subgraph of $G$ containing the paths selected for effect analysis.
The $G'$-specific effect of $x$ on $Y$ (relative to reference $\bar{x}$) is defined as the total effect of $x$ on $Y$ in a modified model $\bar{M}_G$ formed as follows.
Let each parent set, $\Parents{}_i$, be partitioned into two parts
\begin{equation}
    \Parents{}_i = \{\Parents{}_i(G'), \Parents{}_i(\tilde{G}')\}
\end{equation}
where $\Parents{}_i(G')$ represents those members of $\Parents{}_i$ that are linked to $X_i$ in $G'$, and $\Parents{}_i(\tilde{G}')$ represents the complementary set, from which there is no link to $X_i$ in $G'$.
We replace each function $f_i(\parents{}_i(G'), \epsilon)$ with a new function $\bar{f}_i(\parents{}_i, \epsilon; G)$, where $\epsilon$ are realizations of the exogenous random variables, defined as
\begin{equation}
    \bar{f}_i(\parents{}_i, \epsilon; G') = f_i(\parents{}_i(G'), \bar{\parents{}}_i(\tilde{G}'), \epsilon)
\end{equation}
where $\bar{\parents{}}_i(\tilde{G}')$ stands for the values that the variables in $\Parents{}_i(\tilde{G}')$ would attain (in $M$ and $\epsilon$) under $X = \bar{x}$ (that is $\bar{\parents{}}_i(\tilde{G}') = \Parents{}_i(\tilde{G}')_{\bar{x}}$).
The $G'$-specific effect of $x$ on $Y$, denoted $\text{SE}_{G'}(x, \bar{x}; Y, \epsilon)_{M}$ is defined as
\begin{equation}
    \text{SE}_{G'}(x, \bar{x}; Y, \epsilon)_{M} = \text{TE}(x, \bar{x}; Y, \epsilon)_{\bar{M}_{G'}}.
\end{equation}
\end{manualdef}

As an extension of this idea, we introduce the path-specific objective (PSO) in a SCIM.
Intuitively, the PSO captures the causal effect of the agent's action on its objective which is carried along a given causal path, while the other variables take their `natural' distributions.
In order to define this, while evaluating each action we impute a policy to future time-steps which the agent currently expects its future-self will take, which converts the SCIM into a SCIM with a single-decision CID (similarly to \citealp{everitt_agent_2021}).
The path-specific objective may then be defined with respect to an underlying SCM: we compare the path-specific effect of a imputing a particular candidate action, $a$, to a baseline action, $\bar{a}$, which reduces the SCIM to a SCM.
We then compute the path-specific effect in this model in simulation.
More formally, we define the PSO as:

\begin{definition}\label{def:PSO}
The \textit{Path-specific Objective} (PSO) for an action node, $A_t$, in a SCIM, $\M = \langle \G, \mathbf{\mathcal{E}}, \mathbf{F}, P \rangle$, is defined with respect to: $\G'$, an edge-subgraph of $\G$; $\pi$, a policy imputed to future actions; and $\bar{a}$, a default action.
The PSO is 
$$\pso = SE_{G'}(a, \bar{a}; \U, \epsilon)_{\M_{\pi, \bar{a}}}.$$
\end{definition}

Using this definition we can estimate this PSO.

\begin{theorem}
\label{thm:main_result}
For all delicate MDPs, $\M = \langle \G, \mathbf{\mathcal{E}}, \mathbf{F}, P \rangle$, and for any action $A_t$, the edge-subgraph $\G'$ of $\G$ illustrated in \cref{fig:cid_3_step_without_ici} admits no ICI over $\mathbf{Z}$.
Further, given a policy, $\pi$, and default action, $\bar{a}$, the PSO for an action, $a$, is (up to an additive constant of the expected utility under $\bar{a}$)
\begin{equation}
    \pso = \U_{\pi, \mathbf{Z}_{\bar{a}}, a}(\epsilon),
    \label{eq:pso}
\end{equation}
which is the potential response of the utility under $\pi$, $a$, and $\mathbf{Z}_{\bar a}$ (the nested counterfactual $\mathbf{Z}$ under $\bar{a}$).
\end{theorem}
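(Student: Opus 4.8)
The plan is to prove the two assertions separately: the absence of an ICI over $\mathbf{Z}$ is a purely graphical fact following from the ICI criterion (\cref{thm:ici}), while the closed form for the PSO is obtained by unfolding \cref{def:pse} and matching the modified model against the nested counterfactual node by node. For the no-ICI claim I would first reduce to a single decision by imputing $\pi$ to every action node except the one under analysis, so that \cref{thm:ici} applies and an ICI over a node $X$ exists iff $\G'$ contains a directed path $A\to X\to\mathbf{U}$. The crucial structural observation is that in the edge-subgraph of \cref{fig:cid_3_step_without_ici} every edge $A_k\to Z_{k+1}$, together with every edge $S_k\to Z_{k+1}$ for which $S_k$ is itself an action-descendant, has been deleted. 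I would then show by induction on $k$ that the ancestor set of each $Z_k$ in $\G'$ is contained in $\{S_0,Z_0,Z_1,\dots,Z_{k-1}\}$ and hence contains no action node; consequently no directed path $A\to Z_k$ exists, a fortiori none of the form $A\to Z_k\to\mathbf{U}$, and \cref{thm:ici} yields the absence of an ICI over $\mathbf{Z}$.

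For the closed form I would start from \cref{def:PSO} and \cref{def:pse}, writing $\pso=\text{SE}_{G'}(a,\bar a;\U,\epsilon)_{\M_{\pi,\bar{a}}}=\text{TE}(a,\bar a;\U,\epsilon)_{\bar{M}_{G'}}=\U_a(\epsilon)|_{\bar{M}_{G'}}-\U_{\bar a}(\epsilon)|_{\bar{M}_{G'}}$, where in $\bar{M}_{G'}$ each node evaluates its $G'$-parents live and holds its $\tilde{G}'$-parents frozen at the values they attain under $A_t=\bar a$, with a single shared realization $\epsilon$ throughout. The baseline term is immediate: under $A_t=\bar a$ the live and the frozen parents coincide, so $\U_{\bar a}(\epsilon)|_{\bar{M}_{G'}}=\U_{\bar a}(\epsilon)$, which is independent of $a$ and is exactly the claimed additive constant.

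The substance is the live term, and I expect this to be the main obstacle; the argument is a two-stage topological induction. First I would show that in $\bar{M}_{G'}$ (with $A_t=a$) every delicate node collapses to its baseline value, $Z_k=(Z_k)_{\bar a}$: its frozen parents (the deleted action and action-descendant edges) are baseline by construction, while its surviving parents are either the roots $S_0,Z_0$ or the earlier delicate node $Z_{k-1}$, which is baseline by the inductive hypothesis, so the structural function returns $(Z_k)_{\bar a}$. Second, I would show that every robust and utility node computes exactly the value it takes under $\dop{\mathbf{Z}=\mathbf{Z}_{\bar a}}$ with $A_t=a$: any delicate input it receives---whether along a surviving $G'$-edge or a frozen $\tilde{G}'$-edge---equals the baseline $(Z_k)_{\bar a}$ (the surviving ones because each $Z_k$ is already baseline by the first stage, the frozen ones by construction), while its action and robust inputs propagate $a$ live, which is precisely the computation under the $\mathbf{Z}$-intervention. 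Matching node by node gives $\U_a(\epsilon)|_{\bar{M}_{G'}}=\U_{\pi,\mathbf{Z}_{\bar a},a}(\epsilon)$, and subtracting the constant yields \cref{eq:pso}.

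The delicate point, and the one most likely to hide an error, is the interplay in the second stage between surviving and frozen delicate edges: one must verify that the edge-subgraph freezes \emph{enough} action-to-delicate edges that the delicate chain collapses entirely to baseline---so that retained live edges such as $Z_k\to R_k$ and $Z_k\to S_{k+1}$ are harmless, carrying only baseline values---while simultaneously retaining \emph{every} edge that carries the non-delicate influence of $a$ on $\U$, lest the live term lose a legitimate path. I would also take care that the same exogenous realization $\epsilon$ is used in the nested counterfactual $\mathbf{Z}_{\bar a}$, in the frozen parents, and in the live evaluation, so that the node-by-node comparison is an identity of deterministic quantities rather than merely of expectations.
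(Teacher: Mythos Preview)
Your proposal is correct and follows essentially the same approach as the paper: apply the graphical ICI criterion after cutting the $A_{t'}\to Z_{t'+1}$ and $S_{t'}\to Z_{t'+1}$ edges, then unfold \cref{def:pse} to identify the modified model with the intervention $\dop{\mathbf{Z}=\mathbf{Z}_{\bar a}}$. The paper compresses your two-stage induction into the single assertion that, because all $A_t\to\mathbf{Z}$ paths are cut, every node's modified structural function effectively reads its $\mathbf{Z}$-parents as $\mathbf{Z}_{\bar a}$; your explicit separation of ``delicate nodes collapse to baseline'' from ``robust and utility nodes then see only baseline delicate inputs, whether via frozen or surviving edges'' is precisely the justification that step requires and is the more careful presentation.
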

\begin{proof}
The edge-subgraph $\G'$ of $\G$ illustrated in \cref{fig:cid_3_step_without_ici} removes the arrows $A_{t'}\to Z_{t'+1}$ and $S_{t'}\to Z_{t'+1}$  for $t'\geq t$.
By Theorem \ref{thm:ici}, $\G'$ admits no ICI on any $Z_{t'}\in \mathbf{Z}$ because there is no directed path from $A_t$ to $\mathbf{Z}$.

We further define an SCM, $M$, as in Definition \ref{def:PSO} by imputing future actions under $\pi$.
Now, by the definition of path-specific effects \citep[Definition 8]{pearl_direct_2001}, the $G'$-specific effect of $a$ on $\U$ relative to default action, $\bar{a}$, is equal to the total effect of $a$ on $\U$ in a modified model, $M' = \langle \G, \mathbf{\mathcal{E}}, \mathbf{F}', P \rangle$.
Thanks to the fact that all paths from $A_t$ to $\mathbf{Z}$ have been cut in $G'$, 
the resulting structural functions $\mathbf{F}'$ become
\begin{equation}
    f^V_*(\parents{V}, \epsilon^V; \G') = f^V(\parents{V} \setminus \mathbf{Z}, \mathbf{Z}_{\bar{a}}, \epsilon^V),
\end{equation}
where $\mathbf{Z}_{\bar a}$ are the values that $\mathbf{Z}$ would attain under $\bar{a}$ (in $M$ and under the exogenous variable assigned to $V$, $\epsilon^V \sim \mathcal{E}^V$).
This corresponds to computing the return normally, except for imputing a `natural' distribution to the delicate states, $\mathbf{Z}$, that matches what would have happened on the default action, $\bar a$.
This is equation \eqref{eq:pso}, the return under an imputed natural distribution.
\end{proof}

It follows from Theorem \ref{thm:main_result} that any optimal policy with respect to the PSO in $\M$ also optimizes the total effect of $a$ on $\U$ in the modified model.
The analysis of instrumental control incentives offered by \citet{everitt_agent_2021} is relative to the SCIM for which the agent is optimal.
That is, if we train an agent to be PSO-optimal in $\M$, the CID under which the agent is return-optimal is $\G'$.
Therefore, we must look at $\G'$ to infer agent incentives.
$\G'$ admits no ICI on $\mathbf{Z}$, so an agent trained with the PSO does not have an ICI on $\mathbf{Z}$.
However, note that such a policy may still systematically affect $\mathbf{Z}$ as a side-effect of acting towards some other objective.
\footnote{
Although we consider MDPs for simplicity, partially observable MDPs can be used.
Everywhere we have state random variables at a time-step, instead consider two random variables, one of which is observed and the other which is not.
That is, at time $t$ we have the random variables $\{S_t^o, S_t^u, Z_t^o, Z_t^u, A_t, R_t\}$.
Any influence arrow that would have gone to $S_t$ now goes to both $S_t^o$ and $S_t^u$, and similarly for $Z$.
The proofs proceed similarly, with the sub-graph $\G'$ needing to block all paths from both the observed and unobserved delicate state instead.
The resulting path-specific objective estimation is no longer computable from the agent's perspective, since part of the relevant state can no longer be observed, but can be done from the agent designer's perspective if the unobserved state is known-to-the-designer.
}

Unfortunately, except for one time-step effects, these path-specific effects are not identifiable from experiments.
\citet[Theorem 5]{avin_identifiability_2005} show that the path-specific effect of an edge-subgraph $G'$ of a Markov causal graph $G$ is not experimentally identifiable if and only if there is no node $W$ such that: there is a path $X \rightarrow W$ in $G$, there is a path $W \rightarrow Y$ which is in $G$ but not $G'$, and there is a path $W \rightarrow Y$ which is in both $G$ and $G'$.
For example, $S_1$ is such a node.
This contrasts with the optimization of policies with respect to path-specific effects considered by \citep{nabiEstimation2018} who focus on settings with only a single time-step and non-factored MDPs.

Even though the effects cannot be experimentally identified, they are identifiable with further assumptions like counterfactual independence which can be assumed in simulated environments \citep{robins_alternative_2011} and in some cases could be bounded with milder assumptions.
The next subsection discusses several ways to approximate the path-specific effect in practice.
While there are situations where these estimates will be inaccurate \citep{shpitser_counterfactual_2013}, they all remove the ICI on the delicate state by producing models without directed paths $A_0\to \mathbf{Z}$.

\subsection{Estimating the Natural Distribution}\label{s:context}
A path-specific effect can be defined for any default action, $\bar{a}$.
However, we do not just estimate the effect but also optimize an agent with respect to it.
As a result, some default actions provide more useful comparisons than others.
We must make two design choices: first we must select $\bar{a}$ for $A_0$ and a policy $\pi$ for future actions; second, we must have a scheme for estimating $\mathbf{Z}_{\bar{a}}$.
We call our estimate for this potential response $\bar{\mathbf{z}}$, which we use to compute the PSO in simulation.
Note that $\bar{\mathbf{z}}$ must not depend on any descendent of $A_0$ (this would induce an ICI over $\mathbf{Z}$).
Moreover, the intervention must not depend on the current policy in order for standard convergence results for the MDP optimization algorithm in question to apply.

For estimating the natural distribution, we suggest three approaches, detailed in Table \ref{tbl:interventions}.
The most principled solution is to take the default action from a default trustworthy policy, which we call a policy baseline.
Here, we define a hypothetical policy, $\bar{\pi}$, compute the way in which $\mathbf{Z}$ would evolve under that policy, and use this to impute $\bar{\mathbf{z}}$.
This is effective if we can simulate the full system well enough to infer how the counterfactual system would have evolved.

Where this is not possible, as a heuristic for setting $\bar{\mathbf{z}}$ we can set a baseline over the state itself, selecting a rule $\hat p$ for how $\mathbf{Z}$ evolves given the previous state.
This works if we know how the delicate state tends to evolve naturally.
Insofar as marginalizing over the policy baseline entails a state baseline, this is a special case of the policy baseline.

As a final heuristic, we can intervene using a fixed state such as the initial delicate state $Z_0$.
This works if we can record $Z_0$ and expect relatively little change.
In \S\ref{s:experiments} we demonstrate these choices in simple settings.

We also note that the standard RL objective that optimizes the total effect (rather than a path-specific one) can be recovered by setting the intervention value according to the actual environment dynamics.

\begin{table}[t]
    \centering
    \begin{tabular}{ll}
         \toprule
         Intervention & $\dop{Z_{t+n} = \bar{z}}$\\
         \midrule
         Policy Baseline &  $\bar{z} \sim  p_{\bar\pi}(Z_{t+n} \mid s_t, z_t)$\\
         State Baseline & $\bar{z} \sim \hat{p}(z_{t+n}\mid z_t, s_t)$\\
         Fixed State & $\bar{z} = z_t$\\
         Ordinary w/ ICI & $\bar{z} \sim p_{\pi}(Z_{t+n} \mid s_t, z_t)$\\ 
         \bottomrule
    \end{tabular}
    \caption{Agent designers can forecast how the delicate state is likely to evolve in order to sample from the `natural' distribution when estimating the path-specific effects.
    Different approximations represent different choices of `default' behaviour.
    We define the n-step policy distribution as the distribution over the delicate state after $n$ steps which is achieved by following the policy $\pi$ in the environment: 
    $p_\pi(z_{t+n} \mid s_t, z_t) = \sum_{\bm{a}, \bm{s}, \bm{z}}\prod_{i=0}^{n-1} p(z_{t+i+1}, s_{t+i+1} \mid z_{t+i}, s_{t+i}, a_{t+i})\pi(a_{t+i} \mid z_{t+i}, s_{t+i})$.
    The 'policy baseline' imputes an alternative hypothetical policy $\bar{\pi}(a_{t+i} \mid z_t, s_t)$ for all $i$.
    To remove the ICI, it is important that the choice of intervened value does not depend on $A_t$, even indirectly.
    }
    \label{tbl:interventions}
\end{table}

\section{Unifying and Generalizing Prior Work}\label{s:comparisons}
\begin{table*}[]
    \centering
    \small
    \begin{tabular}{p{0.32\textwidth} p{0.4\textwidth} p{0.18\textwidth}}
         \toprule
         Prior work & Note & Reference\\
         \midrule
         Decoupled Approval & Reward tampering focused. Uses a policy baseline, but sample from same policy. & \citep{uesato_avoiding_2020}\\
     Counterfactual Reward \& Uninfluencability & Reward tampering focused. Assumes $Z \nrightarrow S$. &\citep{armstrong_2020_pitfalls, everitt_reward_2021}\\
         Frozen Preference Model& Preference manipulation. Uses fixed state intervention. & \citep{everitt_agent_2021} \\
         Current-RF optimisation & Reward tampering focused. Uses fixed state intervention. & \citep{everitt_reward_2021}\\
         Auto-induced Distributional Shift & Like policy intervention from fixed pool of diverging `counterfactual' worlds.& \citep{krueger_hidden_2020} \\
         Ignoring Effect Through Some Channel & No robust state, mostly consider one-step decisions. & \citep{taylor_maximizing_2016}\\
         \bottomrule
    \end{tabular}
    \caption{Overview of prior work which our framework generalizes. Uses our terminology. Details in main text.}
    \label{tbl:overview}
\end{table*}

Some prior work proposes modifying training \textit{environments} to remove undesired control incentives.
In fact, these proposals can be interpreted as specifying an intervention distribution for the estimation of path-specific causal effects.
Many of these are also special cases of a delicate-state setting.
A schematic overview is provided in Table \ref{tbl:overview}.

\begin{description}[leftmargin=0pt, labelwidth=\widthof{\bfseries General },style=unboxed]
    \item[Decoupled Approval] \citet{uesato_avoiding_2020} propose giving a reward for a state-action pair different from the action taken by the agent.
    In our terminology, their ``reward generating mechanism'' constitutes a \textit{delicate} state because the reward is \textit{unenforceable}.
    Their algorithm is what we call a policy baseline in which $\bar{\pi} = \pi$, but with a different sample from the same random variable.
    \item[Counterfactual Reward and Uninfluencability] \citet{everitt_reward_2021} consider the problem of reward tampering.
    This is a special case of our setting, in which the reward function state is the delicate state (and additionally they assume, in our terminology, that $\mathbf{Z}$ cannot influence $\mathbf{S}$ directly).
    Their proposal of \textit{counterfactual reward functions} can be understood as, in our terminology, running a policy baseline and using this intervention to estimate a PSO.
    Similarly, \citet{armstrong_2020_pitfalls} require that an agent's actions cannot influence its reward-function learning process and propose a reward-function depending on what would have happened if the agent had not taken actions.
    \item[Frozen Preference Model] To avoid an incentive to manipulate user preferences, \citet{everitt_agent_2021} propose learning and freezing a model of a person's preferences and using these to provide a reward to the agent (their Fig.\ 4b).
    This is equivalent to a fixed state intervention on the delicate state to estimate the PSO.
    Current-RF optimisation similarly uses a frozen version of the reward function to evaluate future states to avoid reward tampering \citep{everitt_reward_2021}.
    \item[Auto-induced Distributional Shift] \citet{krueger_hidden_2020} try to avoid the incentive for agents to induce shifts in the state-distribution by reassigning a population of agents to a new environment at each time-step.
    They do not explicitly distinguish delicate and robust state, instead they note that not all distribution-shift is bad.
    Their algorithm is a restricted version of a policy baseline with two major differences: that the intervention context is re-used every $K$ steps (so the control incentive is only weakened, not removed), and that the intervention context is allowed to diverge after initialization rather than updating to match the starting point of each new decision (which is why their method does not work well in multi-timestep environments).
    \item[Maximizing a Quantity While Ignoring Effect Through Some Channel] \citet{taylor_maximizing_2016} propose that an agent might optimize an objective while ignoring influence that flows via a part of the state.
    They impute a distribution to that state induced by some natural decision and use the resulting counterfactual objective to determine the actual decision.
    Our formalization generalizes theirs by considering the interaction between delicate and robust state over multiple timesteps and exploring different strategies for picking the natural distribution.
\end{description}

Insofar as these proposals identify and demarcate delicate parts of the state-space, they need to show that these parts are \textit{stable} to show that the modifications create safe agents.

\section{Experiments}
\label{s:experiments}
We present two experimental tests of our approach in order to elaborate the underlying mathematical mechanisms.
First, we use a simple tabular environment to demonstrate how an agent optimizing a PSO will not take opportunities to change the delicate state, but will act in a way that is responsive to externally-caused changes to the delicate state.
Second, we show how our method removes the incentive to manipulate user preferences in a content recommendation setting used by \citet{krueger_hidden_2020}.
This experiment also reveals how removing control incentives does \textit{not} guarantee safety in an unstable environment.
\subsection{Tabular Example: Barging}
\begin{figure}[h]
    \centering
    \begin{subfigure}[b]{\columnwidth}
        \centering
        \includegraphics[width=0.7\columnwidth]{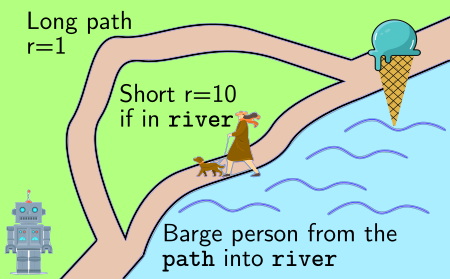}
        \caption{Barging Setting.}
        \label{fig:barging_payoffs}
    \end{subfigure}
    \hfill
    \begin{subfigure}{\columnwidth}
    \begin{tabular}{lcc}
    \toprule
      & \multicolumn{2}{c}{Delicate State - $Z$ (person's position)} \\
        \cmidrule(l){2-3} 
        Action &\texttt{path} & \texttt{river} \\
        \midrule
        \texttt{L}  & reward = 1; end.& reward = 1; end. \\
        \midrule
    \texttt{S} & \texttt{no operation}     & reward = 10; end.\\
    \midrule
    \texttt{B} & Z $\coloneqq$ \texttt{river}; reward = 0 & \texttt{no operation}\\
    \bottomrule
    \end{tabular}
    \caption{Barging Payoffs.}
    \label{tbl:barging_payoffs}
\end{subfigure}
    \begin{subfigure}[b]{0.48\textwidth}
        \centering
            \begin{tabular}{lrrrr}
            \toprule
            Agent & Policy & $\E{}{\U}$ & $\E{}{\U_{\M'}}$ & $\E{}{\U_O}$\\
            \midrule
            Standard & \texttt{B,S} & $10$ & n.a. & -1\\
            PSO -- det. & \texttt{L} & $1$ & $1$ & 1\\
            PSO -- $\epsilon$-greedy & adaptive & $1.43$ & $1$ & 0.9\\
            \bottomrule
            \end{tabular}
        \caption{Outcomes.}
        \label{fig:barging_outcomes}
    \end{subfigure}
    \caption{(a-b) Three actions: long path has small reward; short path has big reward if the person is in the \texttt{river}; barging pushes the person into the \texttt{river}.
    (c) Normal agents barge and then take the short path, claiming high reward ($\E{}{\U}$).
    A deterministic agent optimizing PSO ($\E{}{\U_{\M'}}$) always does \texttt{L} as desired, not using the delicate state as means to an end.
    But if the person is accidentally pushed into the \texttt{river}, e.g. because of $\epsilon$-greedy behaviour, the agent adapts.
    For an `oracle' return ($\E{}{\U_O}$) with a barging penalty of -11, PSO maximizes performance and mitigates the $\epsilon$-greedy handicap.}
    \label{fig:barging}
\end{figure}
\begin{table}
\resizebox{\columnwidth}{!}{
    \begin{tabular}{@{}ll@{}}
    \toprule
    Hyperparameter & Setting description                                             \\ \midrule
    Number of user types ($K$)                       & $10$ \\
    Number of article types ($M$)                    & $10$ \\
    Number of environments                           & $20$ \\
    Initialization scale                             & $0.03$\\
    Loyalty update rate  ($\alpha_1$)                & $0.03$\\
    Preference update rate                           & $0.003$ with normalization\\
    Architecture                                     & $1$-layer $100$-unit ReLU MLP\\
    Optimization algorithm                           & SGD(lr=$0.01$, $\rho=0.1$)       \\
    Batch size                                       & $10$                  \\
    Number of steps                                  & $2000$ (PBT every $10$)             \\
    \bottomrule
    \end{tabular}}
    \caption{Content Recommendation Hyperparameters.}
    \label{tbl:hypers-iris}
    \end{table}
We construct an environment to demonstrate the effects of PSO around delicate state (Fig.\ \ref{fig:barging_payoffs} and \ref{tbl:barging_payoffs}).
Our agent tries to reach an ice-cream cone before it melts.
Going the long way, the ice-cream melts before arrival giving a small reward.
The short way is fast, and would give high reward, but it is blocked by a person.
The agent can barge the person into the river, opening the short way.

The delicate state, $\mathbf{Z}$, can take two values: the person is either on the \texttt{path} or in the \texttt{river}.
The agent can take one of three actions.
The long path, \texttt{L}, gives a small reward and terminates.
If the person is on the \texttt{path}, the short way, \texttt{S}, achieves nothing.
If the person is in the \texttt{river}, then \texttt{S} gives a large reward and terminates.
The agent can barge the person out of the way, \texttt{B}, which flips the delicate state from \texttt{path} to \texttt{river}.
Because the person's position is \textit{delicate}, we want the agent to take the long path whenever the person is on the \texttt{path}.
It should forgo the cold ice-cream because it should not use the person's position as a means to its end.

Naively, an optimal agent first barges the person off the path and then takes the short path.
The standard reward specification approach would look at this behaviour and say ``We should penalize barging people into the river.''
In \textit{this simple setting} that would work: an `oracle' return where barging gets $-11$ reward, $\U_O$, would prevent barging.
The simplicity helps illustrate the path-specific objectives; but our approach is meant for delicate states which are \textit{subtle} (see \S\ref{s:problem}).

An agent optimal under the PSO acts as desired.
Consider a \textit{fixed intervention}: the PSO rewards conditioned on $\dop{\text{position}=\texttt{path}}$.
The optimal agent now takes \texttt{L}, because the reward under this intervention of \texttt{S} is always zero.
However, if the person happened to fall into the river for other reasons, the agent responds to this: it will then take \texttt{S}.

For example, if the agent is fallible and now has an $\epsilon = 0.1$ chance of taking a random action at each timestep, it might now accidentally go \texttt{B} on the first step (off-policy) and then, since the person is in the \texttt{river} anyhow, deliberately go \texttt{S} the second step.
In many cases, this is what we want.
This means that the agent is responsive to changes in its environment, but will not deliberately use the delicate state as a means to an end.

In Fig.\ \ref{fig:barging_outcomes} we describe the outcomes in this environment for standard agents and those with a fixed intervention PSO.
On the deterministic on-policy version, the PSO agent performs optimally on the corrected `oracle' return, $\U_O$, which by hypothesis we do not have access to.
On the $\epsilon$-greedy off-policy variant, the agent sometimes accidentally takes the penalty for barging, but at least then takes the short path in the new circumstances.
A less flexible agent that never took the short path would score lower on the oracle return.
Note that the desired behaviour produces a \textit{low} expected return ($\E{}{\U}$).
This is not a mistake: by hypothesis our reward function is not all we we care about.

\subsection{Content Recommendation}
\begin{figure}
    \centering
    \resizebox{0.9\columnwidth}{!}{
    \input{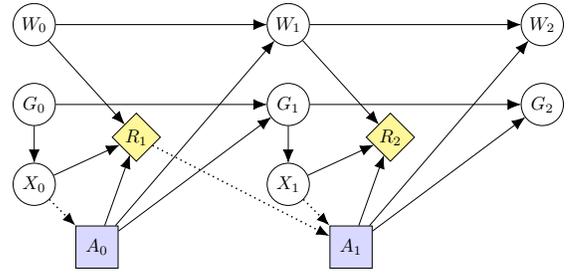}}
    \caption{Content recommendation CID.
    $W$ are preferences (delicate), $G$ are loyalties and $X$ sampled users (robust).}
    \label{fig:content_recommendation_CID}
\end{figure}
We demonstrate our method using the content recommendation simulation from \citet{krueger_hidden_2020}.
A population of neural network content recommendation systems are shown a sample of users and pick topics they predict the user is interested in.
Users who get good recommendations become more likely to be active (i.e., sampled more often).
By assumption, the users become more interested in the topics they are shown.
The content recommendation system updates its recommendation by gradient descent.
Periodically, the best systems are cloned and replace the worst through population-based training \citep{jaderberg_population_2017}.

The CID describing this is in Fig.\ \ref{fig:content_recommendation_CID}.
We retain the notation and set-up used by \citet{krueger_hidden_2020}, with $K$ user types and $M$ article types.
We treat the user preferences ($\mathbf{W}$, a matrix of size $M\prod K$) are the delicate state (equivalent to $\mathbf{Z}$ in this paper), while we treat the loyalties ($\mathbf{g}$, a vector of size $K$) and sampled users ($\mathbf{X}$, a vector of size $N$ sampled according to $\mathbf{g}$) as robust state (elsewhere $\mathbf{S}$).
This CID is therefore a special case of the general delicate MDP CID presented in \S\ref{s:general}, which adds internal structure to the robust state.
This reflects the assumption that it is untoward to try to influence the user's preferences, but fine to build loyalty by giving a good service.

Following \citet{krueger_hidden_2020}, at each time-step, a set of user type indices is sampled from a categorical distribution according to $\mathbf{g}_t$.
The agent then selects action $a_t$, which is an index in the set $\{0\dots M\}$ representing the article type to show that user.
The user clicks on the article with probability $\mathbf{W}_t^{x_t, a_t}$ and the agent gets a reward of 1 if a click arrives and 0 otherwise.
As a result of the action, the loyalties of users who click on the article increase by $\alpha_1$ and all user types become more interested in the article types they were shown.
Unlike their work, we do 10 parallel recommendations per time-step for computational speed.
\begin{figure}
    \centering
    \begin{subfigure}[b]{\columnwidth}
        \resizebox{\linewidth}{!}{
        \includegraphics[width=\columnwidth]{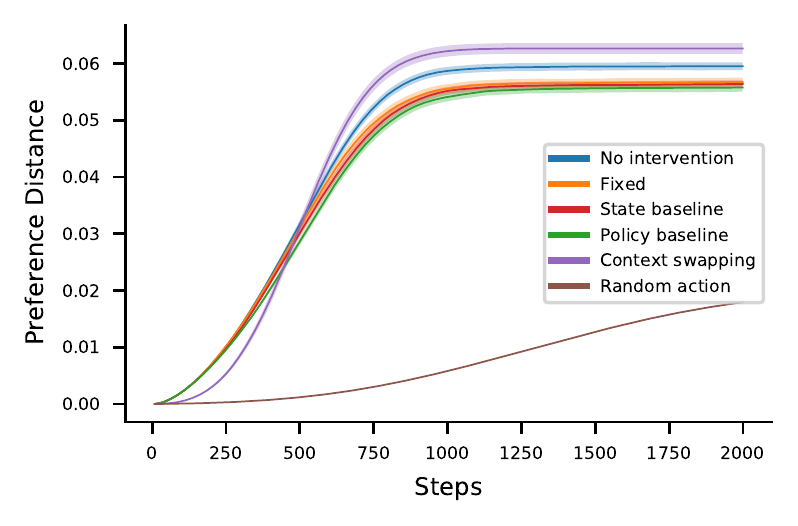}}
        \caption{Cosine distance between starting and current $\mathbf{W}$.}
        \label{fig:preference_distance}
    \end{subfigure}
    \hfill
    \begin{subfigure}[b]{\columnwidth}
        \resizebox{\linewidth}{!}{
        \includegraphics[width=\columnwidth]{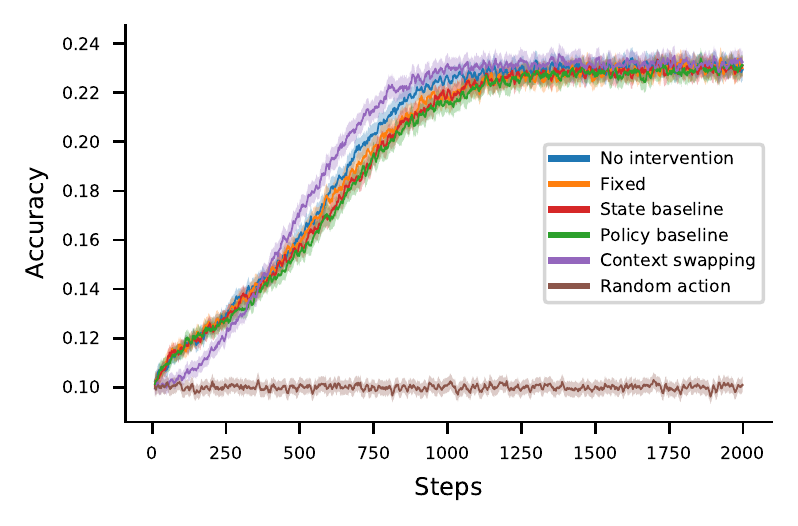}}
        \caption{Accuracy.}
        \label{fig:accuracy}
    \end{subfigure}\\
    \hfill
    \begin{subfigure}[b]{\columnwidth}
        \resizebox{\linewidth}{!}{
        \includegraphics{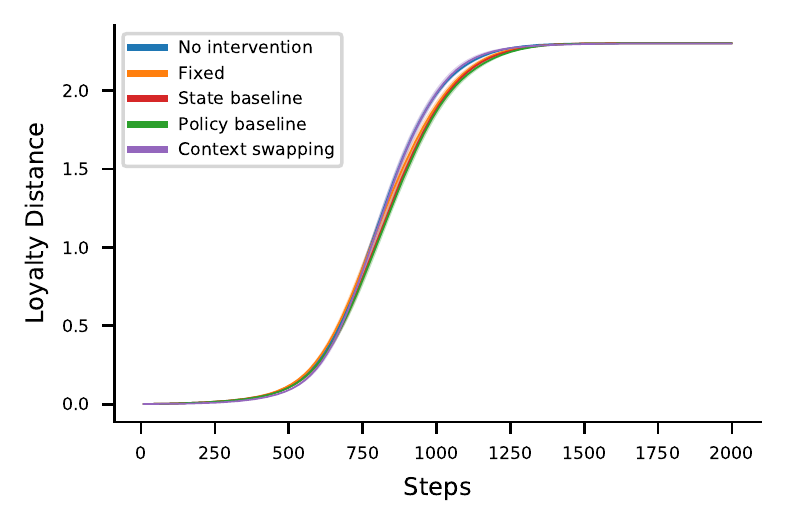}}
        \caption{KL-divergence between starting and current $\mathbf{g}$.}
        \label{fig:loyalty}
    \end{subfigure}
    \caption{
    (a) PSO slows the drift in user-preferences.
    Although context-swapping looks effective initially, in fact it regularizes, leading to more drift eventually.
    Even without learning (totally random) some drift is caused by the system dynamics.
    (b) Methods have comparable accuracy, though context-swapping regularizes.
    (c) None of the interventions appreciably change the rate of drift of the `loyalty' variable, as desired.
    100 seeds, shading is standard error.}
    \label{fig:content_recommendation}
\end{figure}

By default, the agent gradually encourages predictable users to use the platform and develop even more predictable tastes.
An agent trained with PSO has no such incentive.
However, the \textit{unstable} dynamics---predicting well naturally encourages preference drift---mean this happens naturally.

Three variants of the PSO reduce drift similarly although the policy-baseline (which is the most `correct') performs marginally better than the two methods which are more heuristic.
In Fig.\ \ref{fig:preference_distance} we show how removing the control incentive over the path-specific objective reduces the change in user preferences at the end of training relative to no intervention (blue line is higher than orange, red, and green).
This is achieved without meaningful harm to accuracy (Fig. \ref{fig:accuracy}) and without affecting change in loyalty, which we did not treat as delicate (Fig.\ \ref{fig:loyalty}).

For a fixed intervention, we compute the population-based-training score by intervening $\dop{\mathbf{W}_t = \mathbf{W}_0}$.
For the policy baseline, we use a uniformly random baseline policy---$\bar{\pi} = \mathcal{U}(0,M)$---to calculate a simulated counterfactual preference, $\overline{\mathbf{W}}_t$.
The population-based-training score then uses the intervention $\dop{\mathbf{W}_t = \overline{\mathbf{W}}_t}$.
For a state baseline, we assume that the most preferred article type for each user is slightly more preferred each step, intervening with a directly calculated $\overline{\mathbf{W}}$.
In this setting, computing the PSO only marginally reduces training speed, with all variants adding less than $15\%$ to the naive training time.

Fig. \ref{fig:content_recommendation} also shows the strengths and weaknesses of context-swapping \citep{krueger_hidden_2020}.
Changing environments every step slows down learning at first, resulting in smaller preference drift.
But it also regularizes, improving accuracy, causing more preference drift eventually.

In addition to showing how PSO removes control incentives, this experiment shows how removing the control incentive is not enough to ensure safety.
This environment is not \textit{stable}.
Even unmotivated behaviour causes drift because the user always becomes more interested in what they are shown.
Even a completely random policy (brown line) causes some drift in preferences (Fig.\ \ref{fig:content_recommendation} middle).
Regardless of control incentive, preferences drift faster when shown the same topic more often, which happens if the policy is accurate.
Note also that we cannot offer an `oracle' return here because, as designers, we do not really understand what desirable behaviour for user preferences would be.

\section{Related Work}\label{s:related_work}
Our work builds on Causal Influence Diagrams \citep{howard_influence_2005, lauritzen_representing_2001, everitt_agent_2021}, using tools from the path-specific causal effects literature \citep{pearl_direct_2001, avin_identifiability_2005}.
Path-specific effects have been used, especially in medical literature, to measure impacts on only some causal pathways.
Indeed, \citet{nabiEstimation2018} perform policy optimization in a medical context using path-specific effects as a target, although they consider much simpler causal graphs with stronger assumptions.

We aim to address the problem of safe agent design \citep{concrete_amodei_2016} using a strategy which is orthogonal to other approaches which either aim at better-specified rewards \citep{leike_scalable_2018}, preferences \citep{christiano_deep_2017}, or demonstrations \citep{schaal_demonstration_1997}.
In trying to avoid actions that use part of the state as a means to an end, we also adopt a different approach to methods that merely try to avoid changing parts of the environment for whatever reason \citep{turner_avoiding_2020, krakovna_avoiding_2020,carrollEstimating2021}.
A number of papers have considered problems in safe-agent design which can be regarded as special cases of delicate state and use approaches which can be interpreted as special cases of our path-specific objective.
These include reward tampering \citep{uesato_avoiding_2020, everitt_reward_2021}, online reward learning \citep{armstrong_2020_pitfalls}, and auto-induced distributional shift \citep{krueger_hidden_2020}.
\citet{taylor_maximizing_2016} more generally argue that safe agents might need to be able to optimize some objective while ignoring effects along certain channels and propose a counterfactual causal rule for this.

\section{Discussion and Limitations}
\label{s:discussion}
Much existing work on agent safety tries to improve descriptions of good and bad behaviour, e.g., through rewards or demonstration.
This is hard, making it important to consider alternatives.
Out complementary approach splits the \textit{environment} into parts that are easy to reward and parts that are better to simply remove any incentive to control.

This offers a resolution to the subtlety, but it only provides safety if the non-incentivized behaviour is safe, a property we call \textit{stability}.
While instability can be proved by example, stability seems hard to prove.
By definition, anything which is \textit{manipulable} is not stable under all possible natural behaviours---proving stability is therefore contingent, empirical, and a matter of degree.

Stability might be a reasonable assumption in isolable systems (e.g., reward function implementations) or systems that already withstand competitive pressures (e.g., political preferences).
However, we can also understand how unstable systems arise.
For example, suppose that the most interesting content according to a user's current world-view also happens to be content that would radicalize them.
This can be modelled following \citet{armstrongCounterfactual2021} by examining the mutual information between the delicate state and the utility, or by considering them as incentivized side-effects.
We regard studying incentivized side-effects as an important task for future work, in which PSO-agents could be a valuable tool for empirical exploration.

Although stability is a serious requirement, by unifying several previous proposals and providing a clearer language for them we hope to give researchers the tools to make progress addressing it.
Other challenges for implementation include:
\begin{description}[leftmargin=0pt]
    \item[Graph Discovery] To estimate the PSO we must define a causal graph and define the vertices. This is difficult in real settings where it is unclear how to carve up reality.
    \item[Causal estimation] Although not experimentally identifiable, PSOs are identifiable in simulation. Approximation under other assumptions may be possible.
    \item[Distribution Observation] We need to (partly) observe the state. For psychological state, like beliefs, this can be hard.
    \item[Moral choices] Deciding what is delicate and what is not is a complicated ethical decision.
    
Promisingly, recent work \citep{carrollEstimating2021} has begun to make some progress towards modelling user preferences in content recommendation.
    
\end{description}

\section*{Acknowledgements}
We would like to gratefully thank for their comments, help, and discussions Charles Evans, James Fox, Julia Haas, Lewis Hammond, Zach Kenton, David Krueger, Eric Langlois, Vlad Mikulik, Jon Richens, and Rohin Shah.

This work was supported in-part by DeepMind, the EPSRC via the Centre for Doctoral Training for Cyber Security, the Berkeley Existential Risk Initiative, and the Leverhulme Centre for the Future of Intelligence, Leverhulme Trust, under Grant RC2015-067.
\bibliography{references}

\begin{thebibliography}{29}
\providecommand{\natexlab}[1]{#1}

\bibitem[{Amodei et~al.(2016)Amodei, Olah, Steinhardt, Christiano, Schulman,
  and Mané}]{concrete_amodei_2016}
Amodei, D.; Olah, C.; Steinhardt, J.; Christiano, P.; Schulman, J.; and Mané,
  D. 2016.
\newblock Concrete {Problems} in {AI} {Safety}.
\newblock \emph{arXiv}.

\bibitem[{Armstrong and Gorman(2021)}]{armstrongCounterfactual2021}
Armstrong, S.; and Gorman, R. 2021.
\newblock Counterfactual control incentives.
\newblock \emph{Alignment Forum}.

\bibitem[{Armstrong et~al.(2020)Armstrong, Leike, Orseau, and
  Legg}]{armstrong_2020_pitfalls}
Armstrong, S.; Leike, J.; Orseau, L.; and Legg, S. 2020.
\newblock Pitfalls of Learning a Reward Function Online.
\newblock In \emph{Proceedings of the Twenty-Ninth International Joint
  Conference on Artificial Intelligence, {IJCAI} 2020}, 1592--1600.

\bibitem[{Avin, Shpitser, and Pearl(2005)}]{avin_identifiability_2005}
Avin, C.; Shpitser, I.; and Pearl, J. 2005.
\newblock Identifiability of {Path}-{Specific} {Effects}.
\newblock \emph{IJCAI}, 357--363.

\bibitem[{Boutilier, Dearden, and Goldszmidt(2000)}]{boutilier_stochastic_2000}
Boutilier, C.; Dearden, R.; and Goldszmidt, M. 2000.
\newblock Stochastic Dynamic Programming with Factored Representations.
\newblock \emph{Artif. Intell.}, 121(1–2): 49–107.

\bibitem[{Burr, Cristianini, and Ladyman(2018)}]{burr_analysis_2018}
Burr, C.; Cristianini, N.; and Ladyman, J. 2018.
\newblock An analysis of the interaction between intelligent software agents
  and human users.
\newblock \emph{Minds and Machines}, 28: 735--774.

\bibitem[{Carroll et~al.(2021)Carroll, {Hadfield-Menell}, Russell, and
  Dragan}]{carrollEstimating2021}
Carroll, M.; {Hadfield-Menell}, D.; Russell, S.; and Dragan, A. 2021.
\newblock Estimating and {{Penalizing Induced Preference Shifts}} in
  {{Recommender Systems}}.
\newblock \emph{preprint}.

\bibitem[{Christiano et~al.(2017)Christiano, Leike, Brown, Martic, Legg, and
  Amodei}]{christiano_deep_2017}
Christiano, P.~F.; Leike, J.; Brown, T.~B.; Martic, M.; Legg, S.; and Amodei,
  D. 2017.
\newblock Deep Reinforcement Learning from Human Preferences.
\newblock \emph{Neural Information Processing Systems}.

\bibitem[{Everitt et~al.(2021{\natexlab{a}})Everitt, Carey, Langlois, Ortega,
  and Legg}]{everitt_agent_2021}
Everitt, T.; Carey, R.; Langlois, E.; Ortega, P.~A.; and Legg, S.
  2021{\natexlab{a}}.
\newblock Agent {Incentives}: {A} {Causal} {Perspective}.
\newblock \emph{AAAI}.

\bibitem[{Everitt et~al.(2021{\natexlab{b}})Everitt, Hutter, Kumar, and
  Krakovna}]{everitt_reward_2021}
Everitt, T.; Hutter, M.; Kumar, R.; and Krakovna, V. 2021{\natexlab{b}}.
\newblock Reward {Tampering} {Problems} and {Solutions} in {Reinforcement}
  {Learning}: {A} {Causal} {Influence} {Diagram} {Perspective}.
\newblock \emph{Synthese}.

\bibitem[{Halpern and {Kleiman-Weiner}(2018)}]{halpernFormal2018}
Halpern, J.~Y.; and {Kleiman-Weiner}, M. 2018.
\newblock Towards {{Formal Definitions}} of {{Blameworthiness}}, {{Intention}},
  and {{Moral Responsibility}}.
\newblock \emph{AAAI}.

\bibitem[{Howard and Matheson(1984)}]{howard1984principles}
Howard, R.~A.; and Matheson, J.~E. 1984.
\newblock The principles and applications of decision analysis.
\newblock \emph{Strategic Decisions Group, Palo Alto, CA}, 719--762.

\bibitem[{Howard and Matheson(2005)}]{howard_influence_2005}
Howard, R.~A.; and Matheson, J.~E. 2005.
\newblock Influence Diagrams.
\newblock \emph{Decision Analysis}, 2: 127--143.

\bibitem[{Jaderberg et~al.(2017)Jaderberg, Dalibard, Osindero, Czarnecki,
  Donahue, Razavi, Vinyals, Green, Dunning, Simonyan, Fernando, and
  Kavukcuoglu}]{jaderberg_population_2017}
Jaderberg, M.; Dalibard, V.; Osindero, S.; Czarnecki, W.~M.; Donahue, J.;
  Razavi, A.; Vinyals, O.; Green, T.; Dunning, I.; Simonyan, K.; Fernando, C.;
  and Kavukcuoglu, K. 2017.
\newblock Population {Based} {Training} of {Neural} {Networks}.
\newblock \emph{arXiv}.

\bibitem[{Krakovna et~al.(2020)Krakovna, Orseau, Ngo, Martic, and
  Legg}]{krakovna_avoiding_2020}
Krakovna, V.; Orseau, L.; Ngo, R.; Martic, M.; and Legg, S. 2020.
\newblock Avoiding Side Effects by Considering Future Tasks.
\newblock \emph{Neural Information Processing Systems}.

\bibitem[{Kramer, Guillory, and Hancock(2014)}]{kramer_experimental_2014}
Kramer, A. D.~I.; Guillory, J.~E.; and Hancock, J.~T. 2014.
\newblock Experimental evidence of massive-scale emotional contagion through
  social networks.
\newblock \emph{Proceedings of the National Academy of Sciences}, 111(24):
  8788--8790.

\bibitem[{Krueger, Maharaj, and Leike(2020)}]{krueger_hidden_2020}
Krueger, D.; Maharaj, T.; and Leike, J. 2020.
\newblock Hidden {Incentives} for {Auto}-{Induced} {Distributional} {Shift}.
\newblock \emph{arXiv:2009.09153 [cs, stat]}.

\bibitem[{Lauritzen and Nilsson(2001)}]{lauritzen_representing_2001}
Lauritzen, S.~L.; and Nilsson, D. 2001.
\newblock Representing and solving decision problems with limited information.
\newblock \emph{Management Science}, 47: 1235--1251.

\bibitem[{Leike et~al.(2018)Leike, Krueger, Everitt, Martic, Maini, and
  Legg}]{leike_scalable_2018}
Leike, J.; Krueger, D.; Everitt, T.; Martic, M.; Maini, V.; and Legg, S. 2018.
\newblock Scalable agent alignment via reward modeling: a research direction.
\newblock \emph{arXiv}.

\bibitem[{Pearl(2001)}]{pearl_direct_2001}
Pearl, J. 2001.
\newblock Direct and Indirect Effects.
\newblock \emph{Uncertainty in Artificial Intelligence}, 7.

\bibitem[{Pearl(2009)}]{pearl_causality_2009}
Pearl, J. 2009.
\newblock \emph{Causality: {Models}, {Reasoning}, and {Inference}}.
\newblock Cambridge University Press, 2nd edition.

\bibitem[{Razieh, Kanki, and Shpitser(2018)}]{nabiEstimation2018}
Razieh, N.; Kanki, P.; and Shpitser, I. 2018.
\newblock Estimation of Personalized Effects Asssociated with Causal Pathways.
\newblock \emph{Uncertainty in Artificial Intelligence}.

\bibitem[{Robins and Richardson(2011)}]{robins_alternative_2011}
Robins, J.~M.; and Richardson, T.~S. 2011.
\newblock Alternative Graphical Causal Models and the Identification of Direct
  Effects.

\bibitem[{Russell(2019)}]{russell_human_2019}
Russell, S. 2019.
\newblock \emph{Human Compatible: Artificial Intelligence and the Problem of
  Control}.
\newblock Penguin Publishing Group.
\newblock ISBN 9780525558620.

\bibitem[{Schaal(1997)}]{schaal_demonstration_1997}
Schaal, S. 1997.
\newblock Learning from Demonstration.
\newblock \emph{Neural Information Processing Systems}, 9.

\bibitem[{Shpitser(2013)}]{shpitser_counterfactual_2013}
Shpitser, I. 2013.
\newblock {Counterfactual graphical models for longitudinal mediation analysis
  with unobserved confounding}.
\newblock \emph{Cognitive Science}, 37(6): 1011--1035.

\bibitem[{Taylor(2016)}]{taylor_maximizing_2016}
Taylor, J. 2016.
\newblock Maximizing a quantity while ignoring effect through some channel.
\newblock \emph{Alignment Forum}.

\bibitem[{Turner, Ratzlaff, and Tadepalli(2020)}]{turner_avoiding_2020}
Turner, A.; Ratzlaff, N.; and Tadepalli, P. 2020.
\newblock Avoiding Side Effects in Complex Systems.
\newblock \emph{Neural Information Processing Systems}.

\bibitem[{Uesato et~al.(2020)Uesato, Kumar, Krakovna, Everitt, Ngo, and
  Legg}]{uesato_avoiding_2020}
Uesato, J.; Kumar, R.; Krakovna, V.; Everitt, T.; Ngo, R.; and Legg, S. 2020.
\newblock Avoiding {Tampering} {Incentives} in {Deep} {RL} via {Decoupled}
  {Approval}.
\newblock \emph{arXiv:2011.08827 [cs]}.

\end{thebibliography}

\end{document}